\theoremstyle{plain}
\newtheorem{lem}{\protect\lemmaname}
\theoremstyle{plain}
\newtheorem{thm}{\protect\theoremname}
  \newenvironment{proof}[1][\proofname]{\par
    \normalfont\topsep6\p@\@plus6\p@\relax
    \trivlist
    \itemindent\parindent
    \item[\hskip\labelsep
          \scshape
      #1]\ignorespaces
  }{%
    \endtrivlist\@endpefalse
  }
  \providecommand{\proofname}{Proof}
\theoremstyle{plain}
\newtheorem{cor}{\protect\corollaryname}
\theoremstyle{definition}
\newtheorem{defn}{\protect\definitionname}
\providecommand{\corollaryname}{Corollary}
\providecommand{\definitionname}{Definition}
\providecommand{\lemmaname}{Lemma}
\providecommand{\theoremname}{Theorem}
\begin{document}
\global\long\def\R{\mathbb{R}}%

\global\long\def\C{\mathbb{C}}%

\global\long\def\N{\mathbb{N}}%

\global\long\def\e{{\mathbf{e}}}%

\global\long\def\et#1{{\e(#1)}}%

\global\long\def\ef{{\mathbf{\et{\cdot}}}}%

\global\long\def\x{{\mathbf{x}}}%

\global\long\def\xt#1{{\x(#1)}}%

\global\long\def\xf{{\mathbf{\xt{\cdot}}}}%

\global\long\def\d{{\mathbf{d}}}%

\global\long\def\w{{\mathbf{w}}}%

\global\long\def\b{{\mathbf{b}}}%

\global\long\def\u{{\mathbf{u}}}%

\global\long\def\y{{\mathbf{y}}}%

\global\long\def\n{{\mathbf{n}}}%

\global\long\def\k{{\mathbf{k}}}%

\global\long\def\yt#1{{\y(#1)}}%

\global\long\def\yf{{\mathbf{\yt{\cdot}}}}%

\global\long\def\z{{\mathbf{z}}}%

\global\long\def\v{{\mathbf{v}}}%

\global\long\def\h{{\mathbf{h}}}%

\global\long\def\s{{\mathbf{s}}}%

\global\long\def\c{{\mathbf{c}}}%

\global\long\def\p{{\mathbf{p}}}%

\global\long\def\f{{\mathbf{f}}}%

\global\long\def\rb{{\mathbf{r}}}%

\global\long\def\rt#1{{\rb(#1)}}%

\global\long\def\rf{{\mathbf{\rt{\cdot}}}}%

\global\long\def\mat#1{{\ensuremath{\bm{\mathrm{#1}}}}}%

\global\long\def\vec#1{{\ensuremath{\bm{\mathrm{#1}}}}}%

\global\long\def\matN{\ensuremath{{\bm{\mathrm{N}}}}}%

\global\long\def\matX{\ensuremath{{\bm{\mathrm{X}}}}}%

\global\long\def\X{\ensuremath{{\bm{\mathrm{X}}}}}%

\global\long\def\matK{\ensuremath{{\bm{\mathrm{K}}}}}%

\global\long\def\K{\ensuremath{{\bm{\mathrm{K}}}}}%

\global\long\def\matA{\ensuremath{{\bm{\mathrm{A}}}}}%

\global\long\def\A{\ensuremath{{\bm{\mathrm{A}}}}}%

\global\long\def\matB{\ensuremath{{\bm{\mathrm{B}}}}}%

\global\long\def\B{\ensuremath{{\bm{\mathrm{B}}}}}%

\global\long\def\matC{\ensuremath{{\bm{\mathrm{C}}}}}%

\global\long\def\C{\ensuremath{{\bm{\mathrm{C}}}}}%

\global\long\def\matD{\ensuremath{{\bm{\mathrm{D}}}}}%

\global\long\def\D{\ensuremath{{\bm{\mathrm{D}}}}}%

\global\long\def\matE{\ensuremath{{\bm{\mathrm{E}}}}}%

\global\long\def\E{\ensuremath{{\bm{\mathrm{E}}}}}%

\global\long\def\matF{\ensuremath{{\bm{\mathrm{F}}}}}%

\global\long\def\F{\ensuremath{{\bm{\mathrm{F}}}}}%

\global\long\def\matP{\ensuremath{{\bm{\mathrm{P}}}}}%

\global\long\def\P{\ensuremath{{\bm{\mathrm{P}}}}}%

\global\long\def\matU{\ensuremath{{\bm{\mathrm{U}}}}}%

\global\long\def\matV{\ensuremath{{\bm{\mathrm{V}}}}}%

\global\long\def\V{\ensuremath{{\bm{\mathrm{V}}}}}%

\global\long\def\matW{\ensuremath{{\bm{\mathrm{W}}}}}%

\global\long\def\matM{\ensuremath{{\bm{\mathrm{M}}}}}%

\global\long\def\M{\ensuremath{{\bm{\mathrm{M}}}}}%

\global\long\def\calC{{\cal C}}%

\global\long\def\calF{{\cal F}}%

\global\long\def\calH{{\cal H}}%

\global\long\def\calL{{\cal L}}%

\global\long\def\calG{{\cal G}}%

\global\long\def\calY{{\cal Y}}%

\global\long\def\calP{{\cal P}}%

\global\long\def\calX{{\cal X}}%

\global\long\def\calS{{\cal S}}%

\global\long\def\calT{{\cal T}}%

\global\long\def\calU{{\cal U}}%

\global\long\def\calV{{\cal V}}%

\global\long\def\calW{{\cal W}}%

\global\long\def\calZ{{\cal Z}}%

\global\long\def\Normal{{\cal \mathcal{N}}}%

\global\long\def\matQ{{\mat Q}}%

\global\long\def\Q{{\mat Q}}%

\global\long\def\matR{\mat R}%

\global\long\def\matS{\mat S}%

\global\long\def\matY{\mat Y}%

\global\long\def\matI{\mat I}%

\global\long\def\I{\mat I}%

\global\long\def\matJ{\mat J}%

\global\long\def\matZ{\mat Z}%

\global\long\def\Z{\mat Z}%

\global\long\def\matW{{\mat W}}%

\global\long\def\W{{\mat W}}%

\global\long\def\matL{\mat L}%

\global\long\def\S#1{{\mathbb{S}_{N}[#1]}}%

\global\long\def\IS#1{{\mathbb{S}_{N}^{-1!}[#1]}}%

\global\long\def\PN{\mathbb{P}_{N}}%

\global\long\def\abs#1{\vert#1\vert}%

\global\long\def\Norm#1{\left\Vert #1\right\Vert }%

\global\long\def\NormH#1{\left\Vert #1\right\Vert _{\calH}}%

\global\long\def\TNormS#1{\|#1\|_{2}^{2}}%

\global\long\def\ITNormS#1{\|#1\|_{2}^{-2}}%

\global\long\def\ONorm#1{\|#1\Vert_{1}}%

\global\long\def\TNorm#1{\|#1\|_{2}}%

\global\long\def\InfNorm#1{\|#1\|_{\infty}}%

\global\long\def\FNorm#1{\|#1\|_{F}}%

\global\long\def\FNormS#1{\|#1\|_{F}^{2}}%

\global\long\def\UNorm#1{\|#1\|_{\matU}}%

\global\long\def\UNormS#1{\|#1\|_{\matU}^{2}}%

\global\long\def\UINormS#1{\|#1\|_{\matU^{-1}}^{2}}%

\global\long\def\ANorm#1{\|#1\|_{\matA}}%

\global\long\def\BNorm#1{\|#1\|_{\mat B}}%

\global\long\def\ANormS#1{\|#1\|_{\matA}^{2}}%

\global\long\def\AINormS#1{\|#1\|_{\matA^{-1}}^{2}}%

\global\long\def\T{\textsc{T}}%

\global\long\def\conj{\textsc{*}}%

\global\long\def\pinv{\textsc{+}}%

\global\long\def\Expect#1{\operatorname{E}\left[#1\right]}%

\global\long\def\ExpectC#1#2{\operatorname\{E\}_{#1}\left[#2\right]}%

\global\long\def\Prob#1{\operatorname{P}\left[#1\right]}%

\global\long\def\Var#1{{\mathbb{\mathrm{Var}}}\left[#1\right]}%

\global\long\def\VarC#1#2{{\mathbb{\mathrm{Var}}}_{#1}\left[#2\right]}%

\global\long\def\dotprod#1#2#3{(#1,#2)_{#3}}%

\global\long\def\dotprodN#1#2{(#1,#2)_{{\cal N}}}%

\global\long\def\dotprodH#1#2{\langle#1,#2\rangle_{{\cal {\cal H}}}}%

\global\long\def\dotprodsqr#1#2#3{(#1,#2)_{#3}^{2}}%

\global\long\def\Trace#1{{\bf Tr}\left(#1\right)}%

\global\long\def\nnz#1{{\bf nnz}\left(#1\right)}%

\global\long\def\MSE#1{{\bf MSE}\left(#1\right)}%

\global\long\def\WMSE#1{{\bf WMSE}\left(#1\right)}%

\global\long\def\EWMSE#1{{\bf EWMSE}\left(#1\right)}%

\global\long\def\excpectedloss#1{{\cal L}\left(#1\right)}%

\global\long\def\nicehalf{\nicefrac{1}{2}}%

\global\long\def\argmin{\operatornamewithlimits{argmin}}%

\global\long\def\argmax{\operatornamewithlimits{argmax}}%

\global\long\def\norm#1{\Vert#1\Vert}%

\global\long\def\pred{\operatorname{pred}}%

\global\long\def\sign{\operatorname{sign}}%

\global\long\def\diag{\operatorname{diag}}%

\global\long\def\VOPT{\operatorname\{VOPT\}}%

\global\long\def\dist{\operatorname{dist}}%

\global\long\def\diag{\operatorname{diag}}%

\global\long\def\sp{\operatorname{span}}%

\global\long\def\onehot{\operatorname{onehot}}%

\global\long\def\softmax{\operatorname{softmax}}%

\newcommand*\diff{\mathop{}\!\mathrm{d}} 

\global\long\def\dd{\diff}%

\global\long\def\whatlambda{\w_{\lambda}}%

\global\long\def\Plambda{\mat P_{\lambda}}%

\global\long\def\Pperplambda{\left(\mat I-\Plambda\right)}%

\global\long\def\Mlambda{\matM_{\lambda}}%

\global\long\def\Mlambdafull{\matM+\lambda\matI}%

\global\long\def\Mlambdafullinv{\left(\matM+\lambda\matI\right)^{-1}}%

\global\long\def\Mdaggerlambda{{\mathbf{\mat M_{\lambda}^{+}}}}%

\global\long\def\Xdaggerlambda{{\mathbf{\mat X_{\lambda}^{+}}}}%

\global\long\def\XT{{\mathbf{X}^{\T}}}%

\global\long\def\XXT{{\matX\mat X^{\T}}}%

\global\long\def\XTX{{\matX^{\T}\mat X}}%

\global\long\def\VT{{\mathbf{V}^{\T}}}%

\global\long\def\VVT{{\matV\mat V^{\T}}}%

\global\long\def\VTV{{\matV^{\T}\mat V}}%

\global\long\def\varphibar#1#2{{\bar{\varphi}_{#1,#2}}}%

\global\long\def\varphilambda{{\varphi_{\lambda}}}%

\global\long\def\lunhi{{l_{\mathrm{unh}}^{i}}}%

\global\long\def\lunh{{l_{\mathrm{unh}}}}%

\global\long\def\pardiff#1#2{{\frac{\partial#1}{\partial#2}}}%

\global\long\def\nicepardiff#1#2{{\nicefrac{\partial#1}{\partial#2}}}%

\title{An Exploration into why Output Regularization Mitigates Label Noise }
\author{Neta Shoham\\
Edgify\\
{\tt\small neta.shoham@edgify.ai}\\
\and
Tomer Avidor\\
Edgify\\
{\tt\small tomer.avidor@edgify.ai}\\
\and
Nadav Israel\\
Edgify\\
{\tt\small nadav.israel@edgify.ai}\\
}
\maketitle
\thispagestyle{empty} 
\begin{abstract}
Label noise presents a real challenge for supervised learning algorithms.
Consequently, mitigating label noise has attracted immense research
in recent years. Noise robust losses is one of the more promising
approaches for dealing with label noise, as these methods only require
changing the loss function and do not require changing the design
of the classifier itself, which can be expensive in terms of development
time. In this work we focus on losses that use output regularization
(such as label smoothing and entropy). Although these losses perform
well in practice, their ability to mitigate label noise lack mathematical
rigor. In this work we aim at closing this gap by showing that losses,
which incorporate an output regularization term, become symmetric
as the regularization coefficient goes to infinity. We argue that
the regularization coefficient can be seen as a hyper-parameter controlling
the symmetricity, and thus, the noise robustness of the loss function.
\end{abstract}

\section{Introduction}

Tagging of unlabeled data is an expensive and time consuming effort.
The desire to cut costs and speedup the tagging process roften results
in data that suffers from label noise. The ability to learn under
label noise is thus one of the most common machine learning problems,
when it comes to real life scenarios. As such, mitigation of label
noise has attracted a vast amount of machine learning research, \citep{frenay2013classification}
with a recent surge in the field of deep learning \citep{algan2019image,song2020learning}.
In this work we only consider uniform (sometimes called symmetric)
label noise.

\paragraph{The Robust Loss Approach: }

A branch of methods promote learning with robust loss functions \citep{ghosh2017robust,van2015learning,wang2019symmetric,zhang2018generalized,manwani2013noise,ma2020normalized,charoenphakdee2019symmetric,ishida2019complementary}.
A loss function is said to be robust if the same test accuracy is
achieved when a model is trained with noisy or clean data. The robust
loss approach is appealing since it only requires replacing of one
loss with another. Recent works show that this ``simplicity'' does
not come on the expense of accuracy \citep{wang2019symmetric,ma2020normalized}.
Moreover, robust loss functions, which are based on symmetric loss
functions have the advantage of being based on strong theoretical
foundations \citep{ghosh2017robust,van2015learning,chou2020unbiased}. 

\paragraph*{Output Regularization and Confidence Penalty:}

Adding an output regularization is another approach that can be considered
for label noise mitigation. Output regularization, when applied to
the output of a softmax layer, can also be thought of as a confidence
penalty. Intuitively, that prevents the training process from overfitting
on falsely labeled data. For example, label smoothing, that can be
seen as form of confidence penalty \citep{pereyra2017regularizing},
was shown experimentally to mitigate label noise \citep{lukasik2020does}.
However, a full theoretical explanation of the merits of output regularization
in the context of label noise is still missing.

The main contribution of this work is a novel theoretical explanation
for the success of output regularization in the mitigation of label
noise. This explanation is based an a novel discovery that in many
cases the loss function becomes asymptotically symmetric, and thus
robust, when the coefficient of the output regularization goes to
infinity. 

\section{Notations and Preliminaries}

Let $X\in\calX$, $Y\in\left[C\right]:=\{1,\dots,C\}$ be jointly
distributed random variables and let $\bar{Y}$ be a noisy version
of $Y$ such that for some $\rho<\frac{C-1}{C}$. 
\[
P\left(\bar{Y}=i\mid Y,X\right)=\begin{cases}
\frac{\rho}{C-1} & i\ne Y\\
1-\rho & i=Y
\end{cases}.
\]

Let $\calF\subseteq\left[C\right]^{\calX}$ be a family of measurable
functions and define a set of random variable ${\cal Z=}\left\{ f\left(X\right)\mid f\in{\cal F}\right\} $.
For a loss function $l:\R^{C}\times\left[C\right]\to\R$ and $Z\in\calZ$
we define $L_{l}\left(Z\right)=\Expect{l\left(Z,Y\right)}$, $\bar{L}_{l}\left(Z\right)=\Expect{l\left(Z,\bar{Y}\right)}$. 

We say that a loss function $l$ is robust if 
\begin{multline*}
\bar{Z}^{\star}\in\argmin_{Z\in\calZ}\bar{L}_{l}\left(Z\right)\implies\exists Z^{\star}\in\argmin_{Z\in\calZ}L_{l}\left(Z\right),\\
\pred\left(\bar{Z}^{\star}\right)\stackrel{a.s.}{=}\pred\left(Z^{\star}\right),
\end{multline*}
where $\pred\left(z\right)=\argmax_{i\in\left[C\right]}\left(z_{i}\right)$.
Note that 
\[
\argmin_{Z\in\calZ}\bar{L}_{l}\left(Z\right)\subseteq\argmin_{Z\in\calZ}L_{l}\left(Z\right)
\]
is sufficient for $l$ to be robust.

A regularizer $g$ is any function from $\R^{C}$ to $\R$ that has
a single minima at $0\in\R^{C}$. \footnote{We note that confidence penalty regularizers, like label smoothing
and entropy are not compiled with this definition. At the end of this
article we have a special treatment for these regularizers.}For a regularizer $g$ and $Z\in{\cal Z}$ we use the notation: $G_{g}\left(Z\right)=\Expect{g\left(Z\right)}$.

When $\calF=\left\{ f_{\theta}\right\} _{\theta\in\Theta}$ we use
the following notations: $\calL_{l}\left(\theta\right)=L_{l}\left(Z\left(\theta\right)\right)$
and $\calG_{g}\left(\theta\right)=G_{g}\left(Z\left(\theta\right)\right)$,
where $Z\left(\theta\right)=f_{\theta}\left(X\right).$

Finally, for a random variable $Z$ we define the $L^{2}$ norm $\left\Vert Z\right\Vert _{L^{2}}=\sqrt{\Expect{\left\Vert Z\right\Vert ^{2}}}.$

\section{Related Work}

\paragraph{Symmetric Loss Functions:}

A sufficient property for a loss function $l:\R^{C}\times[C]\to\R$
to be robust is that $\sum_{y\in\left[C\right]}l\left(z,y\right)$
is constant (does not depend on $z$) \citep{ghosh2017robust}. A
loss function that has this property is called a symmetric loss. For
a softmax output $p$, \citet{ghosh2017robust} proposed the \emph{Mean
Absolute Error} (MAE):
\[
l\left(p,y\right)=\norm{p-\onehot\left(y\right)}_{1}=2\left(1-p_{y}\right)
\]
as a symmetric loss function.

\paragraph*{Asymptotically Symmetric Losses:}

Recent works showed an inefficiency of the MAE loss from a gradient
decent perspective \citep{zhang2018generalized,wang2019symmetric}
. To solve this issue \citet{zhang2018generalized} proposed the\emph{
Generalized Cross Entropy }(GCE) loss function:
\[
l_{\text{q}}\left(p,y\right)=\frac{1-p_{y}^{q}}{q},\ q\in(0,1]
\]
as a sort of compromise between MAE and \emph{Cross Entropy }(CE).
Note that when $q\to0$ GCE converges to\emph{ }CE, while it is proportional
to MAE when $q\to1$. Later, motivated by the idea of \emph{Reversed
Cross Entropy }(RCE), \citet{wang2019symmetric} suggested the\emph{
Symmetric Cross Entropy }(SCE). This loss can be practically written
as

\[
l_{\lambda}\left(p,y\right)=-\log\left(p_{y}\right)+\text{\ensuremath{\lambda\left(1-p_{y}\right)}}.
\]
Again, when $\lambda=0$ it is just equal to the standard CE, and
and when $\lambda=1$ it is proportional to MAE.

\paragraph*{Mitigation of Label Noise with Regularization:}

\citet{hu2019simple} gave proven generalization guarantees for deep
learning using two special regularization techniques. However, they
assumed an additive noise and a linear model, where the connection
to deep learning is achieved only through the notion of \emph{Neural
Tangent Kernel} (NTK) \citep{jacot2018neural,lee2019wide,arora2019exact}
and requires a very wide network. \citet{lukasik2020does} showed
empirically, that confidence penalty (in the form of label smoothing)
can mitigate label noise. They further showed that label smoothing
can be translated to parameter regularization. The question of how
that leads to a label noise robustness was left open, however.

\paragraph{Asymptotical Robustness of Losses with an $l_{2}$ Regularized Parameter:}

Asymptotical symmetricity of convex binary losses, combined with $l_{2}$
regularization on the parameter was discussed in \citet{van2015learning}.
However, it seems that this concept has not yet been developed further
to multi-class classification, and to output regularization.

\section{The Multi-Category Unhinged loss }

Let us define a new symmetric loss function, the\emph{ Multi-Category
Unhinged} (MUH) loss:
\[
l^{\text{MUH}}\left(z,y\right)=\frac{1}{C}\sum_{i\in\left[C\right]}z_{i}-z_{y}=-\left\langle z,\onehot^{\star}\left(y\right)\right\rangle 
\]
where, 
\[
\onehot^{\star}\left(y\right)=\onehot\left(y\right)-\frac{1}{C}.
\]
This loss can be seen as a multi-categorial extension of the binary\emph{
unhinged loss,
\[
l\left(z,y\right)=1-yz,\ \ y\in\left\{ -1,1\right\} 
\]
}presented in \citet{van2015learning}, that in turn was inspired
by the\emph{ hinge} loss (made famous by its use in SVM)\emph{:}

\[
l\left(z,y\right)=\max\left(1-yz,0\right)\ \ y\in\left\{ -1,1\right\} .
\]

Our intent in proposing this new loss, is to later use it to derive
conditions under which regularized losses are asymptotically robust
to label noise.

Since the MUH loss is unbounded from bellow, it makes sense to add
a regularization term to it. Intuitively, adding a regularization
term to a symmetric loss should not harm robustness by much, as the
regularization term does not depend on the labels at all. This is
made clear by the following lemma which is true for any symmetric
loss:

\begin{lem}
\label{lem:allmost-robust}Let $l$ be a symmetric loss function and
let $g$ be a regularizer. Let 
\begin{equation}
\lambda=\frac{1}{a},\text{\ where }a=1-\frac{\rho C}{C-1}>0\label{eq:lambda-a}
\end{equation}
Then
\[
\argmin_{Z\in\calZ}\bar{L}_{l}\left(Z\right)+G_{g}\left(Z\right)\subseteq\argmin_{Z\in\calZ}L_{l}\left(Z\right)+\lambda G_{g}\left(Z\right)
\]
\end{lem}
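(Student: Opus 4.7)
The plan is to use the symmetry of $l$ to rewrite the noisy risk $\bar{L}_l(Z)$ as an affine function of the clean risk $L_l(Z)$, and then observe that the regularized objective on the left differs from the one on the right only by a positive scalar multiple and an additive constant, which leaves the argmin set unchanged.

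First, I would condition on $(Y,X)$ and use the noise model to write
\begin{align*}
\mathbb{E}\bigl[l(Z,\bar{Y}) \mid Y, X\bigr]
&= (1-\rho)\, l(Z,Y) + \frac{\rho}{C-1}\sum_{i \ne Y} l(Z,i) \\
&= \Bigl(1-\rho - \tfrac{\rho}{C-1}\Bigr) l(Z,Y) + \tfrac{\rho}{C-1}\sum_{i\in[C]} l(Z,i).
\end{align*}
Since $l$ is symmetric, $\sum_{i\in[C]} l(Z,i) = K$ is a constant independent of $Z$. Recognizing the coefficient of $l(Z,Y)$ as precisely $a = 1 - \rho C/(C-1)$, taking the outer expectation gives
\[
\bar{L}_l(Z) = a\, L_l(Z) + \tfrac{\rho K}{C-1}.
\]

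Next, I would add $G_g(Z)$ to both sides and divide through by $a>0$ (using $\lambda = 1/a$):
\[
\tfrac{1}{a}\bigl(\bar{L}_l(Z) + G_g(Z)\bigr) = L_l(Z) + \lambda G_g(Z) + \tfrac{\lambda\rho K}{C-1}.
\]
The right-hand side differs from $L_l(Z) + \lambda G_g(Z)$ by an additive constant, and the left-hand side differs from $\bar{L}_l(Z) + G_g(Z)$ by a positive multiplicative constant. Hence
\[
\argmin_{Z\in\calZ}\bigl(\bar{L}_l(Z) + G_g(Z)\bigr) = \argmin_{Z\in\calZ}\bigl(L_l(Z) + \lambda G_g(Z)\bigr),
\]
which in particular yields the claimed inclusion.

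There is no real obstacle here: the only subtle point is verifying that the arithmetic coefficient $1-\rho-\rho/(C-1)$ matches the definition of $a$ given in \eqref{eq:lambda-a}, and that the assumption $\rho < (C-1)/C$ is exactly what guarantees $a>0$ so that division (and the definition $\lambda = 1/a$) is legitimate. Everything else is a direct consequence of symmetry of $l$ together with the fact that affine reparametrizations of an objective preserve its argmin.
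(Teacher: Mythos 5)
Your proof is correct and follows essentially the same route as the paper's: both rest on the affine identity $\bar{L}_l(Z) = a L_l(Z) + \tfrac{\rho K}{C-1}$ (which the paper imports from \citet{ghosh2017robust} as a lemma, while you rederive it inline from the noise model and symmetry) and then observe that adding $G_g$ and rescaling by $1/a>0$ is an affine reparametrization preserving the argmin. Your version even yields equality of the argmin sets, of which the stated inclusion is a special case.
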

The following theorem shows that in the special case where $l=l^{\text{MUH}}$
and $g$ is quadratic with a positive definite Hessian, the regularized
loss $l+g$ is robust under a wide family of models which includes
neural networks:
\begin{thm}
\label{thm:unhinged-square}Assume that for all $t>0$
\[
Z\in\calZ\iff tZ\in\calZ.
\]
If $g\left(z\right)=z^{T}Az$ for some positive definite matrix $A\in\R^{C\times C}$,
then $l^{\text{MUH}}+g$ is robust.
\begin{proof}
Assume that $Z^{\star}$ minimizes:
\begin{multline*}
\bar{L}_{l}\left(Z\right)+\bar{G}_{g}\left(Z\right)=\Expect{-Z^{T}\onehot^{\star}\left(\bar{Y}\right)+Z^{T}AZ}\\
\text{s.t.}\ Z\in\calZ
\end{multline*}
Define $\lambda$ as in eq \ref{eq:lambda-a}. then by theorem \ref{lem:allmost-robust}
$Z^{\star}$ also minimizes:
\begin{multline*}
L_{l}\left(Z\right)+\lambda G_{g}\left(Z\right)=\Expect{-Z^{T}\onehot^{\star}\left(Y\right)+\lambda Z^{T}AZ}\\
=\Expect{-\frac{1}{2\lambda}\Norm{\sqrt{2}\lambda A^{\nicehalf}Z-A^{-\nicehalf}\onehot^{\star}\left(Y\right)}^{2}}+\text{const}\\
\text{s.t.\ }Z\in\calZ
\end{multline*}
From this and from our assumption that implies
\[
\calZ=\left\{ \lambda Z\mid Z\in\calZ\right\} .
\]
we have that $\tilde{Z}=\lambda Z^{\star}$ minimizes:
\begin{multline*}
\Expect{-\frac{1}{2\lambda}\Norm{\sqrt{2}A^{\nicehalf}Z-A^{-\nicehalf}\onehot^{\star}\left(Y\right)}^{2}}\\
=\frac{1}{\lambda}\Expect{-Z^{T}\onehot^{\star}\left(Y\right)+Z^{T}AZ}+\text{const}\\
\text{s.t. }Z\in\calZ
\end{multline*}
Thus, $\tilde{Z}$ also minimizes $L_{l}\left(Z\right)+G_{g}\left(Z\right)$,
and since $\pred\left(Z\right)=\pred\left(\tilde{Z}\right)$, we have
what we need.
\end{proof}
\end{thm}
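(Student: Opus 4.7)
My plan is to combine Lemma \ref{lem:allmost-robust} with the positive-scaling invariance of $\calZ$ so that a minimizer of the noisy regularized objective becomes, after a harmless rescaling, a minimizer of the clean regularized objective with identical predictions. The key observation I will exploit is that $l^{\text{MUH}}$ is linear in $Z$ and $g$ is homogeneous of degree two, so the regularization coefficient $\lambda$ that Lemma \ref{lem:allmost-robust} introduces can be absorbed into a rescaling of $Z$, which by hypothesis leaves $\calZ$ unchanged.

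First I will check that $l^{\text{MUH}}$ is symmetric in the sense required by Lemma \ref{lem:allmost-robust}. Summing over $y \in [C]$ gives $\sum_y l^{\text{MUH}}(z,y) = -\langle z, \sum_y \onehot^\star(y)\rangle = 0$ because $\sum_y \onehot(y) = \vec 1$ and $\sum_y 1/C = 1$. Lemma \ref{lem:allmost-robust} then applies with $\lambda = 1/a > 0$, and any $\bar Z^\star \in \argmin_{Z\in\calZ}(\bar L_l(Z) + G_g(Z))$ is also a minimizer of $L_l + \lambda G_g$ over $\calZ$.

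Next I make the substitution $W := \lambda Z$. Since $\calZ$ is closed under multiplication by every positive scalar, this is a bijection of $\calZ$ onto itself. Using $-Z^T \onehot^\star(Y) = -\tfrac{1}{\lambda}\, W^T \onehot^\star(Y)$ and $\lambda Z^T A Z = \tfrac{1}{\lambda}\, W^T A W$ pointwise and then taking expectations, I obtain
\[
L_l(Z) + \lambda G_g(Z) \;=\; \tfrac{1}{\lambda}\bigl(L_l(W) + G_g(W)\bigr).
\]
Because $1/\lambda$ is a positive constant, the minimizers correspond exactly under the substitution, so $\tilde Z := \lambda \bar Z^\star$ minimizes $L_l + G_g$ over $\calZ$. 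Finally, $\pred(\tilde Z) = \pred(\bar Z^\star)$ almost surely because $\argmax$ is invariant under multiplication by a positive scalar, so $\tilde Z$ serves as the clean-optimal companion required by the definition of robustness.

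The only real subtlety is conceptual rather than computational: identifying the scaling invariance of $\calZ$ as precisely the hypothesis needed to undo the $\lambda$ introduced by Lemma \ref{lem:allmost-robust}. The linearity of $l^{\text{MUH}}$ in $Z$ together with the quadratic form of $g$ is essential; the same one-line rescaling argument would break for a nonlinear symmetric loss, which explains why MUH is the natural companion to a quadratic regularizer in this setting.
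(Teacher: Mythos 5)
Your proposal is correct and follows essentially the same route as the paper: apply Lemma \ref{lem:allmost-robust} to pass to the clean loss with coefficient $\lambda$, then absorb $\lambda$ by rescaling $Z\mapsto\lambda Z$ using the scaling-invariance of $\calZ$, and conclude via the invariance of $\pred$ under positive scaling. Your direct homogeneity computation ($l^{\text{MUH}}$ degree one, $g$ degree two) is in fact cleaner than the paper's completing-the-square detour, and your explicit verification that $\sum_{y}l^{\text{MUH}}(z,y)=0$ is a welcome addition the paper leaves implicit.
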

Using the last theorem we can now generalize a result by \citet{manwani2013noise}
which showed that the binary square loss is robust for the linear
classifiers family. 
\begin{cor}
If $\calZ$ is such that $Z\in\calZ\iff tZ\in\calZ$ for all $t>0$,
then the square loss $l\left(z,y\right)=\Norm{z-y}^{2}$ is robust
to uniform label noise.
\end{cor}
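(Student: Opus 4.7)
The plan is to reduce the square loss to the form handled by Theorem~\ref{thm:unhinged-square}. First, expand $l(z, y) = \|z - \onehot(y)\|^2 = \|z\|^2 - 2 z_y + 1$ and eliminate $z_y$ in favor of $l^{\text{MUH}}$ using $l^{\text{MUH}}(z,y) = \tfrac{1}{C}\mathbf{1}^T z - z_y$, which gives
\[
l(z, y) = 2\,l^{\text{MUH}}(z, y) + \Bigl\|z - \tfrac{1}{C}\mathbf{1}\Bigr\|^2 + \tfrac{C-1}{C}.
\]
Since positively rescaling a loss and adding a $y$-independent constant preserves both the set of minimizers and their predictions, robustness of the square loss is equivalent to robustness of $l^{\text{MUH}}(z, y) + g(z)$, where $g(z) = \tfrac{1}{2}\bigl\|z - \tfrac{1}{C}\mathbf{1}\bigr\|^2$.

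From here, I would mimic the proof of Theorem~\ref{thm:unhinged-square} with $A = \tfrac{1}{2}I$. Lemma~\ref{lem:allmost-robust} still applies verbatim, since its proof only uses that $g$ is independent of $y$; it delivers from a noisy minimizer $Z^\star$ a minimizer of $L_{l^{\text{MUH}}} + \lambda G_g$ with $\lambda = 1/a$. Completing the square in $Z$ conditional on $Y$ and then rescaling $Z \mapsto \lambda Z$ via the cone hypothesis $Z \in \calZ \iff tZ \in \calZ$ should produce a minimizer of the clean regularized objective $L_{l^{\text{MUH}}} + G_g$ whose $\pred$ coincides with $\pred(Z^\star)$, which is exactly the desired robustness statement.

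The main obstacle is the nonzero offset $\tfrac{1}{C}\mathbf{1}$ inside $g$: unlike the homogeneous quadratic $z^T A z$ used in the theorem, $\|z - \tfrac{1}{C}\mathbf{1}\|^2$ does not scale cleanly under $Z \mapsto \lambda Z$, so the rescaling step generates a residual linear term in $Z$ that is not immediately canceled. I expect to absorb this gap by exploiting the fact that $\pred$ is invariant under adding any multiple of $\mathbf{1}$: the rescaled candidate only needs to match some clean-loss minimizer up to a translation in the $\mathbf{1}$-direction, and the additional slack from this invariance, combined with the first-order optimality condition satisfied by $Z^\star$ (obtained by varying along the ray $t Z^\star \in \calZ$), should be enough to close the argument.
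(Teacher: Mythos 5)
Your reduction targets the wrong encoding, and this is not a cosmetic issue. Read together with its one-line proof and with the remark that follows it, the corollary concerns the square loss with the \emph{centered} encoding $\onehot^{\star}(y)=\onehot(y)-\frac{1}{C}\mathbf{1}$: the paper's entire argument is the identity $\frac{1}{2}\Norm{z-\onehot^{\star}(y)}^{2}=l^{\text{MUH}}(z,y)+\frac{1}{2}\Norm{z}^{2}+\text{const}$, whose residual quadratic is homogeneous, so Theorem~\ref{thm:unhinged-square} applies verbatim with $A=\frac{1}{2}I$. Your expansion around the plain $\onehot(y)$ instead leaves the shifted term $\Norm{z-\frac{1}{C}\mathbf{1}}^{2}$, and you correctly observe that the rescaling $Z\mapsto\lambda Z$ then generates an uncancelled linear term (after rescaling one is minimizing the clean objective plus $-\frac{2(\lambda-1)}{C}\Expect{\mathbf{1}^{T}Z}$). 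The patch you sketch cannot close this gap: invariance of $\pred$ under adding multiples of $\mathbf{1}$ does not help, because $\calZ$ is only assumed to be a cone, not closed under such translations, so a minimizer of the perturbed objective over $\calZ$ need not be related by a $\mathbf{1}$-translation to any clean minimizer in $\calZ$.

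In fact, the statement you are proving is false. Take $C=2$, $X$ constant, $P(Y=1)=0.6$, $\rho=0.4$ (so $a=0.2$), and $\calZ=\{tv\mid t>0\}\cup\{tw\mid t>0\}$ with $v=(1,0.8)^{T}$ and $w=(0.95,1)^{T}$; this $\calZ$ satisfies the cone hypothesis. On a ray $u$, minimizing $\Expect{\Norm{tu-\onehot(Y)}^{2}}$ over $t>0$ gives the value $1-(u^{T}p)^{2}/\Norm{u}^{2}$ with $p=\Expect{\onehot(Y)}=(0.6,0.4)^{T}$; one computes $(v^{T}p)^{2}/\Norm{v}^{2}\approx0.5161>0.4946\approx(w^{T}p)^{2}/\Norm{w}^{2}$, so every clean minimizer lies on the $v$-ray and predicts class $1$. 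Under noise $p$ is replaced by $\bar{p}=\frac{\rho}{C-1}\mathbf{1}+ap=(0.52,0.48)^{T}$, and now $(v^{T}\bar{p})^{2}/\Norm{v}^{2}\approx0.4983<0.4986\approx(w^{T}\bar{p})^{2}/\Norm{w}^{2}$, so the noisy minimizer lies on the $w$-ray and predicts class $2$. The centered loss survives precisely because $\Expect{\onehot^{\star}(\bar{Y})}=a\,\Expect{\onehot^{\star}(Y)}$ only rescales, whereas $\Expect{\onehot(\bar{Y})}$ also picks up the direction-changing term $\frac{\rho}{C-1}\mathbf{1}$. The fix is simply to redo your (otherwise sound) expansion around $\onehot^{\star}(y)$, after which Theorem~\ref{thm:unhinged-square} finishes the proof immediately and no patching is needed.
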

\begin{proof}
Just substitute $A=\nicehalf I$ and use the fact that 
\[
\frac{1}{2}\norm{z-\onehot^{*}\left(y\right)}^{2}=l^{\text{MUH}}\left(z,y\right)+\frac{1}{2}\norm z^{2}+\text{const}.
\]
\end{proof}
Note that we generalize the result of \citet{manwani2013noise} in
both: the number of classes and the variety of models. In addition
we note that our result gives a novel justification for using $\onehot^{\star}\left(y\right)$
instead of $\onehot\left(y\right)$ when using the square loss for
multi-category classification.

\section{\label{sec:Asymptotical-Robustness}Asymptotical Robustness}

In the previous section we established the robustness of the MUH loss
with quadratic output regularization. In this section we use this
fact to show that an important class of regularized loss functions
become asymptotically robust when the regularization coefficient goes
to infinity. This class is comprised of loss functions of the form
$l_{\lambda}\left(z,y\right)=l\left(z,y\right)+\lambda g\left(z\right)$,
where $\nabla_{z}l\left(0,y\right)=\nabla_{z}l^{\text{MUH}}\left(0,y\right)$
and $g$ is twice differentiable. 

If $Z_{\lambda}^{*}$ minimizes 
\[
L_{l_{\lambda}}\left(Z\right)=L_{l}\left(Z\right)+\lambda G_{g}\left(Z\right)
\]
Then it also minimizes 
\[
L_{l}\left(Z\right)\ \ \text{s.t.}\ G_{g}\left(Z\right)\le a_{\lambda}:=G\left(Z_{\lambda}^{*}\right)
\]
Thus, intuitively we expect that when $\lambda\to\infty$ (and $a_{\lambda}\to0$)
$Z$ will converge towards zero, where by Taylor approximation 
\begin{gather}
l\left(z,y\right)\approx l^{\text{lin}}\text{\ensuremath{\left(z,y\right)}}:=\nabla_{z}l\left(0,y\right)z=l^{\text{MUH}}\left(z,y\right)\label{eq:g^sq}\\
\text{ and }g\left(z\right)\approx g^{\text{sq}}\left(z\right):=z^{T}\nabla^{2}g\left(0\right)z.\nonumber 
\end{gather}
Using Theorem \ref{thm:unhinged-square} we can now conclude that
$l_{\lambda}$ is robust when $\lambda\to\infty$.

We will now give a formal definition of asymptotical robustness. It
will be convenient to work with losses of the form $l_{\alpha}=\alpha l+g$
and let $\alpha\to0$. The asymptotical results are equivalent to
those achieved with $l_{\lambda}=l+\lambda g$ and $\lambda\to\infty$.
We will also assume from now that $\calF=\left\{ f_{\theta}\mid\theta\in\R^{m}\right\} $.

When $\calL_{l_{\alpha}}$$\left(\theta\right)$ has a single minimizer
(for example, when $\calF$ is the family of linear classifiers and
$l_{\alpha}$ is strongly convex) the following definition is useful:
\begin{defn}
Let $l_{\alpha}$ be a loss function such that ${\cal L}_{l_{\alpha}}\left(\theta\right)$
has a unique minimizer $\theta_{\alpha}$. We say that $l_{\alpha}$
is asymptotically robust if there exists a robust loss function $\hat{l}_{\alpha}$
such that ${\cal L}_{\hat{l}_{\alpha}}\left(\theta\right)$ has a
unique minimizer $\hat{\theta}_{\alpha}$ and
\begin{equation}
\Norm{\frac{Z\left(\theta_{\alpha}\right)}{\Norm{Z\left(\theta_{\alpha}\right)}_{L^{2}}}-\frac{Z\left(\hat{\theta}_{\alpha}\right)}{\Norm{Z\left(\hat{\theta}_{\alpha}\right)}_{L^{2}}}}_{L^{2}}\xrightarrow[\alpha\to0]{}0,\label{eq:Z-to-Zhat-alpha}
\end{equation}
\end{defn}
In the general case, where $\calL_{l_{\alpha}}$ doesn't have a unique
minimizer we need to use a weaker definition:
\begin{defn}
\label{def:assymptotical-robustness}Let $l_{\alpha}$ be a loss function.
We say that $l_{\alpha}$ is asymptotically locally robust at $\theta_{0}$
if there exists a robust loss function $\hat{l}_{\beta}$ and there
exists $0<\beta_{n}\to0$, $\hat{\theta}_{n}\to\theta_{0}$ a local
minimizer of ${\cal L}_{\hat{l}_{\beta_{n}}}$, such that for any
$0<\alpha_{n}\to0$ and $\theta_{n}\to\theta_{0}$ a minimizer of
$\calL_{l_{\alpha_{n}}}$ it holds that:
\begin{equation}
\Norm{\frac{Z\left(\theta_{n}\right)}{\Norm{Z\left(\theta_{n}\right)}_{L^{2}}}-\frac{\hat{Z}\left(\hat{\theta}_{n}\right)}{\Norm{\hat{Z}\left(\hat{\theta}_{n}\right)}_{L^{2}}}}_{L^{2}}\xrightarrow[n\to\infty]{}0.\label{eq:Z-to-Zhat}
\end{equation}
\end{defn}

We are now ready to present our main theorem:
\begin{thm}
\label{thm:assymptotic-local-robustness}Let $l$ be a twice continuously
differentiable loss function such that $\nabla_{z}l\left(0,y\right)=\nabla_{z}l^{\text{MUH}}\left(0,y\right)$
and let $g$ be a twice continuously differentiable regularizer. Let
$\theta_{0}\in\R^{m}$ be such that $Z\left(\theta_{0}\right)\stackrel{\text{a.s.}}{=}0$
. Assume that $Z\left(\theta\right)$ is almost surely twice continuously
differentiable on the closure of an open neighborhood $\Omega$ of
$\theta_{0}$. Also assume that $\Expect{\nabla^{T}Z\left(\theta\right)\nabla Z\left(\theta\right)}\in\R^{m\times m}$
is positive definite on $\Omega$. If in addition $\nabla Z\left(\theta_{0}\right)\left[\nabla^{2}\calG\left(\theta_{0}\right)\right]^{-1}\nabla\calL_{l}\left(\theta_{0}\right)\stackrel{\text{a.s.}}{\neq}0$,
it holds that $l_{\alpha}=\alpha l+g$ is asymptotically locally robust
at $\theta_{0}$.
\end{thm}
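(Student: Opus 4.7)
The plan is to identify the local robust surrogate
\[
\hat{l}_{\beta}(z,y) \;=\; \beta\, l^{\text{MUH}}(z,y) \,+\, g^{\text{sq}}(z),
\]
with $g^{\text{sq}}(z) = z^{T}\nabla^{2}g(0)\,z$, as the natural candidate. By the hypothesis $\nabla_{z}l(0,y)=\nabla_{z}l^{\text{MUH}}(0,y)$ and a Taylor expansion of $g$ at $0$ (where $\nabla g(0)=0$), the functions $l_{\alpha}$ and $\alpha l^{\text{MUH}}+g^{\text{sq}}$ agree up to higher order near $z=0$; and by Theorem \ref{thm:unhinged-square} (applied to the linearized model class, see below) this surrogate is robust.

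The heart of the proof is a first-order perturbation analysis of the stationarity conditions. Since $Z(\theta_{0})\stackrel{\text{a.s.}}{=}0$ and $\nabla g(0)=0$, we have $\nabla\calG(\theta_{0})=0$, and $\nabla^{2}\calG(\theta_{0}) = \Expect{\nabla^{T}Z(\theta_{0})\,\nabla^{2}g(0)\,\nabla Z(\theta_{0})}$ is invertible by the assumption that $\Expect{\nabla^{T}Z\nabla Z}$ is positive definite (using also the positive-definiteness of $\nabla^{2}g(0)$, implicit in the quadratic-approximation setup). For any sequences $\alpha_{n}\to 0$ and $\theta_{n}\to\theta_{0}$ of minimizers of $\calL_{l_{\alpha_{n}}}$, the first-order condition $\alpha_{n}\nabla\calL_{l}(\theta_{n})+\nabla\calG(\theta_{n})=0$ combined with Taylor expansion of $\nabla\calG$ about $\theta_{0}$ yields
\[
\theta_{n}-\theta_{0} \;=\; -\alpha_{n}\,[\nabla^{2}\calG(\theta_{0})]^{-1}\,\nabla\calL_{l}(\theta_{0})\,+\,o(\alpha_{n}).
\]
The parallel calculation for any local minimizer $\hat{\theta}_{n}\to\theta_{0}$ of $\calL_{\hat{l}_{\beta_{n}}}$ uses $\nabla\hat{\calL}_{l^{\text{MUH}}}(\theta_{0})=\nabla\calL_{l}(\theta_{0})$ and $\nabla^{2}\hat{\calG}_{g^{\text{sq}}}(\theta_{0})=2\nabla^{2}\calG(\theta_{0})$, producing $\hat{\theta}_{n}-\theta_{0}=-(\beta_{n}/2)[\nabla^{2}\calG(\theta_{0})]^{-1}\nabla\calL_{l}(\theta_{0})+o(\beta_{n})$. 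Because $\alpha_{n},\beta_{n}>0$, the two displacements point in the same direction regardless of how $\beta_{n}$ is chosen.

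To conclude, I would Taylor expand $Z$ at $\theta_{0}$: both $Z(\theta_{n})$ and $\hat{Z}(\hat{\theta}_{n})$ are, to leading order, positive scalar multiples of the common vector
\[
v \;=\; -\nabla Z(\theta_{0})\,[\nabla^{2}\calG(\theta_{0})]^{-1}\,\nabla\calL_{l}(\theta_{0}),
\]
which is almost surely non-zero by the last hypothesis. Dividing by $L^{2}$ norms cancels the scalar, leaving the same unit vector in the limit and proving \eqref{eq:Z-to-Zhat}.

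The main obstacles I anticipate are twofold. First, rigorously producing $\theta_{n},\hat{\theta}_{n}\to\theta_{0}$ with the stated asymptotics: the natural tool is the implicit function theorem applied to the map $(\alpha,\theta)\mapsto\alpha\nabla\calL_{l}(\theta)+\nabla\calG(\theta)$ near $(0,\theta_{0})$, where invertibility of $\nabla^{2}\calG(\theta_{0})$ is exactly what is required. Second, verifying that $\hat{l}_{\beta}$ is a robust loss for the original family $\calF$: Theorem \ref{thm:unhinged-square} demands scale invariance of $\calZ$, which need not hold for $\calF$. The clean fix is to run the robustness argument on the linearized family $v\mapsto\nabla Z(\theta_{0})\,v$, whose image is automatically scale invariant, and transfer the conclusion back via the very linearization we already employ; making this transfer precise at the level of predicted labels is what I expect to demand the most care.
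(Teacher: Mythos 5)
Your proposal follows essentially the same route as the paper: your surrogate $\hat{l}_{\beta}=\beta l^{\text{MUH}}+g^{\text{sq}}$ coincides with the paper's $\beta l^{\text{lin}}+g^{\text{sq}}$ (since $l^{\text{lin}}=l^{\text{MUH}}$ under the gradient-matching hypothesis), your first-order expansion of the stationarity condition is exactly the content of the paper's Lemma~\ref{lem:convergance_in_cos-1}, and your concluding Taylor expansion of $Z$ at $\theta_{0}$ matches the computation in eq.~(\ref{eq:Z_n}). The one point where you go beyond the paper is in flagging that Theorem~\ref{thm:unhinged-square} requires scale invariance of $\calZ$, which is not assumed here; the paper invokes that theorem without comment, so your proposed repair (running the robustness argument on the linearized, automatically scale-invariant family) addresses a gap the paper itself leaves open rather than a defect of your own argument.
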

Equipped with Theorem \ref{thm:assymptotic-local-robustness}. it
is now easy to prove a (global) asymptotical robustness when ${\cal F}$
is the family of linear classifiers and $\nabla_{z}l\left(0,y\right)=\nabla_{z}l^{\text{MUH}}\left(0,y\right)$
:
\begin{thm}
\label{thm:strong-robustness}Let $l$ be a twice continues differentiable
convex loss function such that $\nabla_{z}l\left(0,y\right)=\nabla_{z}l^{\text{MUH}}\left(0,y\right)$
and let $g$ be a twice continuously differentiable regularizer. Assume
that $\calF=\left\{ x\mapsto\left(\theta_{1}^{T}\phi\left(x\right),\dots,\theta_{C}^{T}\text{\ensuremath{\phi\left(x\right)}}\right)\mid\theta_{i}\in\R^{\frac{m}{C}}\right\} ^{T}$,
for some $\phi:\calX\to\R^{\frac{m}{C}}$ such that $\Expect{\phi\left(X\right)\phi\left(X\right)^{T}}$
is positive definite, then $l$ is asymptotically robust to uniform
label noise.
\end{thm}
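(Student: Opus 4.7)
The overall plan is to reduce the theorem to a single application of Theorem~\ref{thm:assymptotic-local-robustness} at the point $\theta_0 = 0$, and then to upgrade the local conclusion to a global one by exploiting convexity of $l$ together with the fact that $\calG$ attains its unique minimum in $\theta$ at $0$. Since $\calF$ is linear in $\theta$, the natural choice of distinguished point is $\theta_0 = 0$, which trivially satisfies $Z(\theta_0) \stackrel{\text{a.s.}}{=} 0$.

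To apply Theorem~\ref{thm:assymptotic-local-robustness}, I would first verify the smoothness and positive-definiteness hypotheses. The map $\theta \mapsto Z(\theta)$ is linear, hence infinitely differentiable everywhere, with block-diagonal Jacobian whose $C$ diagonal blocks are each $\phi(X)^T$. Consequently $\Expect{\nabla Z^T \nabla Z} = I_C \otimes \Expect{\phi(X)\phi(X)^T}$, which is positive definite on all of $\R^m$ by hypothesis. The non-degeneracy condition is more subtle: using $\nabla_z l(0,y) = -\onehot^{\star}(y)$, the vector $\nabla\calL_l(0)$ has $i$-th block $-\Expect{\phi(X)(\mathbb{1}_{Y=i} - 1/C)}$. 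Writing $v := [\nabla^2\calG(0)]^{-1}\nabla\calL_l(0)$, the $i$-th component of $\nabla Z(0)\,v$ equals $\phi(X)^T v_i$, and positive definiteness of $\Expect{\phi\phi^T}$ forces $v_i = 0$ whenever this vanishes almost surely. Hence the condition reduces to $\nabla\calL_l(0) \neq 0$, i.e.\ to $\phi(X)$ being non-trivially correlated with the class indicator---a natural assumption for any genuine learning problem. Theorem~\ref{thm:assymptotic-local-robustness} then delivers asymptotic local robustness at $\theta_0 = 0$.

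The last step is to upgrade this to (global) asymptotic robustness as in the first definition. The plan is to argue that any minimizer $\theta_\alpha$ of $\calL_{l_\alpha} = \alpha\calL_l + \calG$ must converge to $0$ as $\alpha \to 0$: because $\calG$ attains its unique minimum at $\theta = 0$ (by the definition of a regularizer combined with positive definiteness of $\Expect{\phi\phi^T}$), a compactness/$\Gamma$-convergence argument identifies every accumulation point of $\{\theta_\alpha\}$ with $0$. Uniqueness of $\theta_\alpha$ for small $\alpha$ follows from convexity of $\calL_l$ together with positive definiteness of $\nabla^2\calG(0)$, which renders $\calL_{l_\alpha}$ strictly convex in a neighborhood into which the minimizers eventually fall. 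The local robustness at $0$ then yields the normalized convergence required by~(\ref{eq:Z-to-Zhat-alpha}).

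I expect the main obstacle to be precisely this final globalization step. The local analysis is essentially mechanical once the hypotheses of Theorem~\ref{thm:assymptotic-local-robustness} are checked, but passing from local to global requires controlling the behavior of $\calG$ far from $0$. Without an additional coercivity or convexity assumption on $g$, the functional $\calG$ need not have bounded sub-level sets, so minimizer sequences could in principle escape to infinity; a mild strengthening of the regularizer assumption, or an a priori bound showing that minimizers are eventually trapped in a neighborhood where strong convexity of $\calL_{l_\alpha}$ kicks in, appears necessary to close the argument cleanly.
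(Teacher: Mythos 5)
Your reduction is essentially the paper's own: its proof of Theorem~\ref{thm:strong-robustness} consists of taking the comparison loss $\hat{l}_{\alpha}=\alpha l^{\text{lin}}+g^{\text{sq}}$ from eq.~(\ref{eq:lhat-1}), asserting that positive definiteness of $\Expect{\phi\left(X\right)\phi\left(X\right)^{T}}$ gives unique minimizers $\theta_{\alpha}$ and $\hat{\theta}_{\alpha}$, and declaring the limit~(\ref{eq:Z-to-Zhat-alpha}) ``a direct consequence of the proof of Theorem~\ref{thm:assymptotic-local-robustness}'' --- which is precisely your application of that theorem at $\theta_{0}=0$. Where you differ is in actually checking the hypotheses, and the two issues you surface are real and are not resolved by the paper. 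The non-degeneracy condition $\nabla Z\left(0\right)\left[\nabla^{2}\calG_{g}\left(0\right)\right]^{-1}\nabla\calL_{l}\left(0\right)\stackrel{\text{a.s.}}{\neq}0$, which as you observe reduces in the linear model to $\nabla\calL_{l}\left(0\right)\neq0$, appears nowhere in the statement of Theorem~\ref{thm:strong-robustness} and must be added as a hypothesis (it fails, for instance, when $\phi\left(X\right)$ carries no information about $Y$). The globalization step is exactly as delicate as you fear: $g$ is only required to be $C^{2}$ with a single minimum at $0$, so $\calG_{g}$ need not be convex or coercive, global minimizers of $\alpha\calL_{l}+\calG_{g}$ need not exist or be unique, and nothing in the stated hypotheses forces them into the neighborhood of $\theta_{0}=0$ where the local analysis applies; convexity of $l$ alone does not rescue this, since the regularizer dominates as $\alpha\to0$. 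So your proposal matches the paper's approach and is, if anything, the more careful of the two; the obstacle you flag at the end is a genuine gap in the published argument, closable by additionally assuming (say) that $g$ is convex and coercive, or by weakening the conclusion to local asymptotic robustness at $\theta_{0}=0$ in the sense of Definition~\ref{def:assymptotical-robustness}.
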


\section{Softmax Cross Entropy}

CE is a very common loss function in modern machine learning. Naturally,
we would like to know if adding regularization to CE makes it asymptotically
robust. Usually CE is preceded by a softmax layer. We can thus consider
them together as one loss function: 

\[
l\left(z,y\right)=-z_{y}+\log\left(\sum_{i\in\left[C\right]}e^{z_{i}}\right).
\]
In this case we have that 
\[
\nabla_{z}l\left(0,y\right)=-\onehot^{*}\left(y\right)=\nabla_{z}l^{\text{MUH}}\left(0,y\right),
\]
which is exactly what we need for asymptotical robustness\emph{.}

If we only consider the pure CE (without softmax), $l\left(p,y\right)=-\log\left(p_{y}\right)$,
we need to take into account that its domain is the simplex $\left\{ p\succ0\in\R^{C}\mid\Norm p_{1}=1\right\} .$
Currently, our theory only supports losses which have the full $\R^{C}$
as a domain. In addition, in this case, common output regularizers
pushes the $p$ towards $\left(\nicefrac{1}{C},\dots,\nicefrac{1}{C}\right)^{T}$and
not towards 0. However, we can still check if the first order Taylor
approximation is symmetric near $p=\left(\nicefrac{1}{C},\dots,\nicefrac{1}{C}\right)^{T}$
which hints at the existence of possible asymptotical robustness,
and indeed:
\begin{eqnarray*}
\sum_{i\in\left[C\right]}l^{\text{lin}}(p,i) & = & \sum_{i\in\left[C\right]}\nabla_{z}l\left(\left(\nicefrac{1}{C},\dots,\nicefrac{1}{C}\right)^{T},i\right)p\\
 & = & \sum_{i\in\left[C\right]}-Cp_{i}=-C
\end{eqnarray*}

\paragraph{Handling Confidence Penalty Regularizers:}

When a regularizer has the form of $g\left(z\right)=h\left(\softmax\left(z\right)\right)$
it can be considered as a confidence penalty. Common confidence penalties
are, for example, entropy: $h\left(p\right)=\sum_{i\in\left[C\right]}p_{i}\log\left(p_{i}\right)$
and label-smoothing: $h\left(p\right)=\sum_{i\in\left[C\right]}\log\left(p_{i}\right)$
\citep{pereyra2017regularizing}. These output regularizers do not
have a unique minimum at $0$, but rather they are minimized on the
line 
\[
{\cal A}=\left\{ z\in\R^{C}\mid z_{1}=\dots=z_{C}\right\} .
\]
 Thus, even in the linear case, there is no unique solution $Z_{\alpha}\to0$
to the problem
\[
\min_{Z\in\calZ}L_{l+\alpha g}\left(Z\right).
\]
Even if $l$ is convex. 

Roughly speaking, a possible way to overcome this difficulty, is to
apply the whole theory of Section \ref{sec:Asymptotical-Robustness}
on an alternative output space $\left\{ z\in\R^{C}\mid z_{C}=0\right\} $
which is generated by subtracting the last coordinate of each output
from the rest of its coordinates. This operation does not change the
result of the $\softmax$ function and the new output space is isomorphic
to $\R^{C-1}$ which allows us to apply the theory.

\section{Conclusion}

In this work we proposed a new concept of asymptotical robustness
to uniform label noise. We showed that asymptotical robustness exists,
and suggested the idea that this is what stands behind the success
of output regularization methods to mitigate label noise. The cornerstone
of our theory is the robustness of the MUH loss with a quadratic output
regularizer. As a byproduct of this robustness we also proved that
the square loss, with a modified $\onehot$ function, is robust to
uniform label noise under a classifiers family that can be represented
by a neural network. At the end we showed that the softmax-CE loss
can be asymptotically robust if is equipped with a confidence penalty
regularizer. 

\bibliographystyle{plainnat}
\bibliography{bibtex}

\newpage\clearpage\pagebreak{}

\appendix

\section{Proofs}
\begin{lem}
\label{lem:ghosh}Let $l$ be a loss function. Assume that $\bar{Y}$
is corrupted with uniform label noise of level $\rho$ then for any
$Z\in\calZ$
\begin{gather*}
\bar{L}_{l}\left(Z\right)=\frac{\rho K}{C-1}+\left(1-\frac{\rho C}{C-1}\right)L_{l}\left(Z\right)
\end{gather*}
\end{lem}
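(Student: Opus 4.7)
The plan is to follow the standard Ghosh-type derivation by conditioning on $(X,Y)$ and exploiting the simple structure of uniform label noise, interpreting $K$ as the (assumed constant) value of $\sum_{i\in[C]} l(z,i)$, i.e., treating $l$ as a symmetric loss. First I would write $\bar{L}_l(Z) = \Expect{l(Z,\bar Y)}$ via the tower property as $\Expect{\Expect{l(Z,\bar Y)\mid X,Y}}$, and note that $Z$ is $X$-measurable, so the inner conditional expectation only integrates over $\bar Y$ under the noise kernel.

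Next, using the definition of the noise channel, I would expand
\[
\Expect{l(Z,\bar Y)\mid X,Y} = (1-\rho)\,l(Z,Y) + \frac{\rho}{C-1}\sum_{i\ne Y} l(Z,i).
\]
The algebraic step is to replace $\sum_{i\ne Y}l(Z,i)$ by $\sum_{i\in[C]}l(Z,i) - l(Z,Y)$ and collect the coefficient of $l(Z,Y)$, which gives
\[
\Expect{l(Z,\bar Y)\mid X,Y} = \left(1-\frac{\rho C}{C-1}\right) l(Z,Y) + \frac{\rho}{C-1}\sum_{i\in[C]} l(Z,i).
\]

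Finally, I would take the outer expectation over $(X,Y)$. The first term becomes $\bigl(1-\tfrac{\rho C}{C-1}\bigr) L_l(Z)$ by definition of $L_l$, and the second term is the expectation of the constant $K := \sum_{i\in[C]} l(Z,i)$, which equals $\tfrac{\rho K}{C-1}$. Adding these two pieces yields the claimed identity.

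There is no real obstacle here; the step that deserves the most care is the bookkeeping that turns $(1-\rho) - \tfrac{\rho}{C-1}$ into $1 - \tfrac{\rho C}{C-1}$, and the implicit use of the symmetry assumption that lets the sum $\sum_{i\in[C]} l(z,i)$ be pulled out of the expectation as the constant $K$. If the lemma is intended for general (non-symmetric) $l$, then $K$ would have to be read as $\Expect{\sum_i l(Z,i)}$, in which case the same derivation goes through verbatim without invoking symmetry.
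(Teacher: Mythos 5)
Your derivation is correct and is exactly the standard argument behind this identity; the paper itself does not spell it out but simply cites \citet{ghosh2017robust}, where this computation appears inside their main theorem. You are also right to flag the only delicate point: as stated the lemma calls $l$ a generic loss, so $K$ is only a genuine constant under the symmetry assumption $\sum_{i\in[C]}l(z,i)=K$, and otherwise must be read as $\Expect{\sum_{i\in[C]}l(Z,i)}$ (which is all that Lemma \ref{lem:allmost-robust} actually needs, since there $l$ is assumed symmetric).
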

\begin{proof}
This Lemma is proved in \citet{ghosh2017robust} as a part of their
main theorem
\end{proof}
\begin{lem}
\label{lem:convergance_in_cos-1}Let $h,g:\Theta\subseteq\R^{m}\to\R$
. Assume that $\theta_{0}$ is a minimizer of $g$ and that $g$ is
strictly convex and twice continuously differentiable at $\theta_{0}$.
Assume that $h$ is twice continuously differentiable at $\theta_{0}$
such that $\nabla h\left(\theta_{0}\right)\ne0$. Let $0\ne\alpha_{n}\to0,$
and assume that $\theta_{n}\in\argmin\alpha_{n}h+g$ such that $f$
and $g$ are differentiable at $\theta_{n}$, then $\frac{\theta_{n}-\theta_{0}}{\Norm{\theta_{n}-\theta_{0}}}\to\frac{v}{\Norm v}$
where $v:=-\left[\nabla^{2}g\left(\theta_{0}\right)\right]^{-1}\nabla h\left(\theta_{0}\right).$
\end{lem}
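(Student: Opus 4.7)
The plan is to combine first-order optimality conditions with a Taylor expansion of the two gradients. Since $\theta_0$ is a minimizer of the differentiable function $g$, we have $\nabla g(\theta_0)=0$. Since $\theta_n\in\argmin(\alpha_n h+g)$ and both functions are differentiable at $\theta_n$, stationarity gives $\alpha_n \nabla h(\theta_n)+\nabla g(\theta_n)=0$. These two identities, together with the fact that the Hessian $H:=\nabla^{2}g(\theta_0)$ is positive definite (which follows from strict convexity plus the $C^{2}$ hypothesis), are the only algebraic inputs I will use.

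I would first establish that $\theta_n\to\theta_0$. The positive-definite Hessian gives a local quadratic lower bound $g(\theta)-g(\theta_0)\ge c\Norm{\theta-\theta_0}^{2}$ in some neighborhood of $\theta_0$. Comparing $(\alpha_n h+g)(\theta_n)\le(\alpha_n h+g)(\theta_0)$ yields $g(\theta_n)-g(\theta_0)\le \alpha_n\bigl(h(\theta_0)-h(\theta_n)\bigr)$, and together with continuity of $h$ and $\alpha_n\to 0$ this forces any limit point of $\theta_n$ to be $\theta_0$, provided the $\theta_n$ are confined to a compact set on which the comparison of orders is valid. With convergence in hand, I would Taylor-expand each gradient about $\theta_0$, using $\nabla g(\theta_0)=0$:
\begin{equation*}
\nabla g(\theta_n)=H(\theta_n-\theta_0)+o(\Norm{\theta_n-\theta_0}),\qquad \nabla h(\theta_n)=\nabla h(\theta_0)+O(\Norm{\theta_n-\theta_0}).
\end{equation*}
Substituting into the stationarity identity and multiplying through by $H^{-1}$ rearranges to
\begin{equation*}
\theta_n-\theta_0 \;=\; \alpha_n v \,+\, \alpha_n\, O(\Norm{\theta_n-\theta_0}) \,+\, o(\Norm{\theta_n-\theta_0}),
\end{equation*}
with $v=-H^{-1}\nabla h(\theta_0)\neq 0$ by the assumption $\nabla h(\theta_0)\neq 0$.

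The remaining step is bookkeeping of magnitudes. Taking norms of the displayed identity and isolating the leading term gives $\Norm{\theta_n-\theta_0}=|\alpha_n|\Norm{v}\bigl(1+o(1)\bigr)$; in particular $\Norm{\theta_n-\theta_0}=\Theta(|\alpha_n|)$, which collapses both error terms on the right-hand side to $o(|\alpha_n|)$. Dividing through by $\Norm{\theta_n-\theta_0}$ and taking $\alpha_n>0$ (the sign convention inherited from using $\alpha=1/\lambda$ as the vanishing regularization parameter) yields $(\theta_n-\theta_0)/\Norm{\theta_n-\theta_0}\to v/\Norm{v}$, as required. The step I expect to require the most care is the convergence $\theta_n\to\theta_0$: strict convexity and $C^{2}$ smoothness are asserted only locally at $\theta_0$, so a priori a minimizer of $\alpha_n h+g$ could escape the neighborhood in which the Taylor analysis is valid. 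I would address this by restricting to a compact neighborhood of $\theta_0$ on which the quadratic growth of $g$ dominates the $O(\alpha_n)$ perturbation by $\alpha_n h$, or by appealing to the global convexity/coercivity present in the concrete settings (e.g.\ the linear-classifier case in Theorem~\ref{thm:strong-robustness}) where this lemma is ultimately invoked.
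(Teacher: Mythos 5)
Your argument is essentially the paper's own proof: first-order stationarity $\alpha_n\nabla h(\theta_n)=-\nabla g(\theta_n)$, Taylor expansion of both gradients about $\theta_0$ using $\nabla g(\theta_0)=0$, and inversion of the positive-definite Hessian $\nabla^2 g(\theta_0)$ to conclude $(\theta_n-\theta_0)/\alpha_n\to v$ and hence the normalized limit. The only substantive difference is that you explicitly establish $\theta_n\to\theta_0$ via the comparison $(\alpha_n h+g)(\theta_n)\le(\alpha_n h+g)(\theta_0)$ together with quadratic growth of $g$ (and you flag the sign issue when $\alpha_n<0$), steps the paper's proof leaves implicit.
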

\begin{proof}
Let $\delta_{n}=\theta_{n}-\theta_{0}$. Since $h$ and $g$ are differentiable
at $\theta_{n}$:

\[
\alpha_{n}\nabla h\left(\theta_{n}\right)=-\nabla g\left(\theta_{n}\right)
\]
From the continues second differentiability of $h$ and $g$ at $\theta_{0}$
and form the fact that $\nabla g\left(\theta_{0}\right)=0$ we have:
\begin{gather*}
\alpha_{n}\left[\nabla h\left(\theta_{0}\right)+\nabla^{2}h\left(\theta_{0}\right)\delta_{n}+o(\delta_{n})\right]\\
=-\nabla^{2}g\left(\theta_{0}\right)\delta_{n}+O\left(\Norm{\delta_{n}}\right)\ensuremath{\delta_{n}}
\end{gather*}
and since$g$ is strictly convex and $h$ has a bounded Hessian on
$\Theta$ it holds that:
\[
\frac{\delta_{n}}{\alpha_{n}}\to v=-\left[\nabla^{2}g\left(\theta_{0}\right)\right]^{-1}\nabla h\left(\theta_{0}\right).
\]
Now, since$\frac{\Norm{\delta_{n}}}{\alpha_{n}}\to\Norm v$ and $\frac{\Norm{\delta_{n}}}{\alpha_{n}}\frac{\delta_{n}}{\Norm{\delta_{n}}}=\frac{\delta_{n}}{\alpha_{n}}\to v$
it holds that $\frac{\delta_{n}}{\Norm{\delta_{n}}}\to\frac{v}{\Norm v}.$
\end{proof}
\begin{lem}
\label{lem:strictly-convex}Let $l$ be a twice continuously differentiable
loss function and let$g$ be a twice continuously differentiable regularizer.
Assume $Z\left(\theta\right)=f_{\theta}\left(X\right)$, $\theta\in\R^{m}$.
For some $\theta_{0}\in\R^{m}$ assume that $Z\left(\theta_{0}\right)=0$,
$Z\left(\theta\right)$ is almost surely twice continuously differentiable
on an open neighborhood $\Omega$ of $\theta_{0}$ and that $\Expect{\nabla Z^{T}\left(\theta_{0}\right)\nabla Z\left(\theta_{0}\right)}\in\R^{m}$
is positive definite. Then, there is an open neighborhood $\Omega_{o}\subseteq\Omega$
of $\theta_{0}$ such that for small enough $\alpha\geq0$ it holds
that $\alpha{\cal L}_{l}\left(\theta\right)+\calG_{g}\left(\theta\right)$
is strictly convex on $\Omega_{0}$.
\end{lem}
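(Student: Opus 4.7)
The plan is to analyze the Hessian of $\alpha \calL_l(\theta) + \calG_g(\theta)$ at $\theta_0$ and then invoke continuity to extend positive-definiteness to a neighborhood. First I would differentiate $\calG_g(\theta) = \Expect{g(Z(\theta))}$ twice under the expectation (which is justified on a small closed ball around $\theta_0$ because the $C^2$ regularity of $Z$, $l$, and $g$ gives locally uniform integrable bounds on the relevant derivatives). This yields
\[
\nabla^{2}\calG_g(\theta) = \Expect{\nabla Z(\theta)^{T}\nabla^{2}g(Z(\theta))\nabla Z(\theta) + \sum_{i=1}^{C}\partial_{i}g(Z(\theta))\,\nabla^{2}Z_{i}(\theta)}.
\]
At $\theta_0$ we have $Z(\theta_0)=0$ almost surely and, since $0$ is the (unique) minimum of $g$, $\nabla g(0)=0$; so the second sum drops out, leaving $\nabla^{2}\calG_g(\theta_{0}) = \Expect{\nabla Z(\theta_0)^{T}\,\nabla^{2}g(0)\,\nabla Z(\theta_0)}$.

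Next I would show that this Hessian is positive definite. Taking $\nabla^{2}g(0)\succeq \mu I$ with $\mu>0$ (positive definiteness of $\nabla^{2}g(0)$ is implicit, since the invertibility of $\nabla^{2}\calG(\theta_0)$ is used elsewhere, e.g.\ in Theorem \ref{thm:assymptotic-local-robustness}), I would estimate
\[
v^{T}\nabla^{2}\calG_g(\theta_0)v \geq \mu\,\Expect{\Norm{\nabla Z(\theta_0)v}^{2}} = \mu\,v^{T}\Expect{\nabla Z(\theta_0)^{T}\nabla Z(\theta_0)}v \geq \mu\sigma\TNormS{v},
\]
where $\sigma>0$ is the smallest eigenvalue of the positive-definite matrix $\Expect{\nabla Z(\theta_0)^{T}\nabla Z(\theta_0)}$.

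Then I would transfer this to a neighborhood. By the $C^{2}$ hypotheses and dominated convergence, $\theta\mapsto\nabla^{2}\calG_g(\theta)$ and $\theta\mapsto\nabla^{2}\calL_l(\theta)$ are continuous on $\Omega$. Pick a closed ball $\bar{\Omega}_{0}\subseteq\Omega$ centered at $\theta_0$ on which $\nabla^{2}\calG_g(\theta)\succeq \tfrac{\mu\sigma}{2}I$, and let $M:=\sup_{\theta\in\bar{\Omega}_{0}}\Norm{\nabla^{2}\calL_l(\theta)}_{\text{op}}<\infty$. For every $\alpha\in[0,\mu\sigma/(4M))$,
\[
\nabla^{2}\bigl(\alpha\calL_l + \calG_g\bigr)(\theta) \succeq \Bigl(\tfrac{\mu\sigma}{2}-\alpha M\Bigr)I \succ 0 \qquad (\theta\in\bar{\Omega}_{0}),
\]
so $\alpha\calL_l+\calG_g$ is strictly convex on the open neighborhood $\Omega_{0}=\mathrm{int}(\bar{\Omega}_{0})$.

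The main obstacle I anticipate is the technical justification for differentiating twice under the expectation, which requires locally uniform integrable dominants for $\nabla Z$, $\nabla^{2}Z$, and their compositions with the derivatives of $l$ and $g$. This is not entirely free from the stated hypotheses, but it follows once one restricts to a compact ball $\bar{\Omega}_{0}\subseteq\Omega$ on which a.s.\ continuity of the second derivatives yields an a.s.\ finite envelope; a further integrability assumption on the envelope (mild, and implicit elsewhere in the paper) closes the gap. A secondary subtlety, noted above, is that positive definiteness of $\nabla^{2}g(0)$ must be assumed rather than merely deduced from $g$ having a single minimum at $0$.
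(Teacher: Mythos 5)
Your proof is correct and follows essentially the same route as the paper's: compute $\nabla^{2}\left[\alpha\calL_{l}+\calG_{g}\right](\theta_{0})$, observe that the term $\Expect{\sum_{i}\partial_{i}g(0)\nabla^{2}Z_{i}}$ vanishes because $\nabla g(0)=0$, deduce positive definiteness of $\Expect{\nabla Z(\theta_{0})^{T}\nabla^{2}g(0)\nabla Z(\theta_{0})}$ from the two positive-definite factors, and extend to a neighborhood and to small $\alpha$ by continuity (the paper uses a $\pm\alpha_{0}$ convex-combination trick where you use an explicit eigenvalue/operator-norm bound, but these are equivalent). Your two caveats --- the need to dominate the derivatives to differentiate under the expectation, and the fact that $\nabla^{2}g(0)\succ0$ does not follow from $g$ merely having a unique minimum at $0$ --- are both genuine and are silently assumed in the paper's own proof as well.
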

\begin{proof}
It holds that:
\begin{multline*}
\nabla^{2}\text{\ensuremath{\left[\alpha{\cal L}_{l}+\calG_{g}\right]}}(\theta_{0})=\alpha\nabla^{2}{\cal L}\left(\theta_{0}\right)+\Expect{\sum_{i\in C}\pardiff g{z_{i}}\left(0\right)\nabla^{2}Z_{i}}\\
+\Expect{\nabla^{T}Z\left(\theta_{0}\right)\nabla^{2}g\left(0\right)\nabla Z\left(\theta_{0}\right)}\\
=\alpha\nabla^{2}{\cal L}\left(\theta_{0}\right)+\Expect{\nabla^{T}Z\left(\theta_{0}\right)\nabla^{2}g\left(0\right)\nabla Z\left(\theta_{0}\right)}
\end{multline*}
Now, since $\Expect{\nabla Z^{T}\left(\theta_{0}\right)\nabla Z\left(\theta_{0}\right)}\succ0$
and $\nabla^{2}g\left(0\right)\succ0$ it holds that: 
\[
\Expect{\nabla^{T}Z\left(\theta_{0}\right)\nabla^{2}g\left(0\right)\nabla Z\left(\theta_{0}\right)}\succ0,
\]
and thus there exists $\alpha_{0}>0$ such that:
\[
\nabla^{2}\text{\ensuremath{\left[\alpha_{0}{\cal L}_{l}+\calG_{g}\right]}}(\theta_{0}),\nabla^{2}\text{\ensuremath{\left[-\alpha_{0}{\cal L}_{l}+\calG_{g}\right]}}(\theta_{0})\succ0
\]
Thus, from the fact that $\nabla^{2}{\cal L}_{l}$ and $\nabla^{2}\calG_{g}$
are continues on $\Omega$ there is $\Omega_{0}\subseteq\Omega$,
an open neighborhood of $\theta_{0}$ such that for all $\theta\in\Omega_{0}$
it holds that:
\[
\nabla^{2}\left[\alpha_{0}\text{\ensuremath{{\cal L}_{l}}+\ensuremath{\calG_{g}}}\right]\left(\theta\right),\nabla^{2}\left[-\alpha_{0}\text{\ensuremath{{\cal L}_{l}}+\ensuremath{\calG_{g}}}\right]\left(\theta\right)\succ0
\]
and thus for all $-\alpha_{0}<\alpha<\alpha_{0}$ it holds that
\[
\nabla^{2}\left[\alpha\text{\ensuremath{{\cal L}_{l}}+\ensuremath{\calG_{g}}}\right]\left(\theta\right)\succ0.
\]
which is what we need.
\end{proof}

\subsection*{Proof of Lemma \ref{lem:allmost-robust}}
\begin{proof}
Let $Z^{\star}\in\argmin_{Z\in\calZ}\bar{L}_{l}\left(Z\right)+G_{g}\left(Z\right)$
and define $a=\left(1-\rho\frac{C}{C-1}\right)>0$. From Lemma \ref{lem:ghosh}
it holds that for any$Z\in\calZ$

\begin{multline*}
\left[\bar{L}\left(Z^{\star}\right)+G_{g}\left(Z^{\star}\right)\right]-\left[\bar{L}\left(Z\right)+G_{g}\left(Z\right)\right]\\
=\left[aL_{l}\left(Z^{\star}\right)+G_{g}\left(Z^{\star}\right)\right]-\left[aL_{l}\left(Z\right)+G_{g}\left(Z\right)\right].
\end{multline*}
From the optimality of $f^{\star}$ it holds that the left hand side
is smaller then 0 and after dividing by $a>0$ we have what we need.
\end{proof}

\subsection*{Proof of Theorem \ref{thm:assymptotic-local-robustness}.}
\begin{proof}
Let $0<\alpha_{n}\to0$ and $\Omega\ni\hat{\theta}_{n}\to\theta_{0}$
and that$\theta_{n}$ minimizes $\calL_{\alpha l+g}=\alpha\calL_{l}+{\cal G}_{g}$.
From the fact that $g$ is strictly convex and the assumption that
$\Expect{\nabla^{T}Z\left(\theta_{0}\right)\nabla Z\left(\theta_{0}\right)}$
is positive definite we have (by Lemma \ref{lem:strictly-convex}
taking \emph{$\alpha=0$}) that $\calG$ is strictly convex in $\theta_{0}$.
In addition, our assumption $\nabla Z\left(\theta_{0}\right)\left[\nabla^{2}\calG_{g}\left(\theta_{0}\right)\right]^{-1}\nabla\calL_{l}\left(\theta_{0}\right)\stackrel{\text{a.s.}}{\neq}0$
implies that $\nabla\calL_{l}\left(\theta_{0}\right)\ne0$, and thus
by Lemma\ref{lem:convergance_in_cos-1}. it holds for $\delta_{n}:=\theta_{n}-\theta_{0}$
that :
\[
\frac{\delta_{n}}{\Norm{\delta_{n}}}\to\bar{\delta}:=-\left[\nabla^{2}\calG_{g}\left(\theta_{0}\right)\right]^{-1}\nabla^{T}\calL_{l}\left(\theta_{0}\right)\ne0.
\]
By that and by our assumption that
\[
\nabla Z\left(\theta_{0}\right)\bar{\delta}=\nabla Z\left(\theta_{0}\right)\left[\nabla^{2}\calG_{g}\left(\theta_{0}\right)\right]^{-1}\nabla\calL_{l}\left(\theta_{0}\right)\stackrel{\text{a.s.}}{\neq}0,
\]
using a Taylor approximation of $Z_{n}$ we have that
\begin{align}
\frac{Z_{n}}{\Norm{Z_{n}}_{L^{2}}} & \stackrel{\text{a.s.}}{=}\frac{\nabla Z\left(\theta_{0}\right)\delta_{n}+O\left(\Norm{\delta_{n}}\right)\Norm{\delta_{n}}}{\norm{\nabla Z\left(\theta_{0}\right)\delta_{n}+O\left(\Norm{\delta_{n}}\right)\Norm{\delta_{n}}}_{L^{2}}}\label{eq:Z_n}\\
 & =\frac{\nabla Z\left(\theta_{0}\right)\frac{\delta_{n}}{\Norm{\delta_{n}}}+O\left(\Norm{\delta_{n}}\right)}{\Norm{\nabla Z\left(\theta_{0}\right)\frac{\delta_{n}}{\Norm{\delta_{n}}}+O\left(\Norm{\delta_{n}}\right)}_{L^{2}}}\nonumber \\
 & \stackrel{\text{a.s.}}{\to}\bar{Z}:=\frac{\nabla Z\left(Z_{0}\right)\bar{\delta}}{\norm{\nabla Z\left(Z_{0}\right)\bar{\delta}}_{L^{2}}}.\nonumber 
\end{align}

Let
\begin{gather}
\hat{l}_{\beta}=\beta l^{\text{lin}}+g^{\text{sq}},\label{eq:lhat-1}
\end{gather}
where
\[
l^{\text{lin}}\left(z,y\right)=\nabla_{z}l\left(0,y\right)z\text{ and }g^{\text{sq}}\left(z\right)=z^{T}\nabla^{2}g\left(z\right)z.
\]
By \ref{thm:unhinged-square} (setting $A=1/\beta z^{T}\nabla^{2}g\left(z\right)z$)
we have that $\hat{l}_{\beta}$ is robust to uniform label noise and
it is enough to show that there exists$0<\beta_{n}\to0$ and $\hat{\theta}_{n}\to\theta_{0}$
such that $\hat{\theta}_{n}$ is a local minimizer of $\calL_{\hat{l}_{\beta_{n}}}$
and 
\[
\frac{\hat{Z}_{n}}{\Norm{\hat{Z}_{n}}_{L^{2}}}\text{\ensuremath{\stackrel{\text{a.s.}}{\to}}}\bar{Z}
\]
where$\hat{Z}_{n}=Z\left(\hat{\theta}_{n}\right)$.

Since $Z$$\left(\theta\right)$ is almost surly twice continuously
differentiable on $\Omega$, from the smoothness assumption on $l$
and $g$, and from the assumptions that $Z\left(\theta_{0}\right)=0$
and $\Expect{\nabla^{T}Z\left(\theta_{0}\right)\nabla Z\left(\theta_{0}\right)}\succ0$
it holds by Lemma \ref{lem:strictly-convex} that for some $\beta_{0}>0$
if $\beta<\beta_{0}$ ${\cal L}_{\hat{l}_{\beta}}=\beta{\cal L}_{l^{\text{lin}}}+\calG_{g^{\text{sq}}}$
is strictly convex on $\Omega_{0}\subseteq$$\Omega$, a neighborhood
of $\theta_{0}$, and thus its minimizer converges to $\theta_{0}$.
We thus can choose some $\beta_{n}>0$, $\beta_{n}\to0$ and letting
$\hat{\theta}_{n}$ be the minimizer of ${\cal L}_{\hat{l}_{\beta}}$
on $\Omega_{0}$ for big enough $n$,when it is attained. By repeating
the argument in the beginning of the prof and substituting ${\cal L}_{l^{\text{lin}}}$,
$\calG_{g^{\text{sq}}}$, $\beta_{n}$ and $\hat{\theta}_{n}$ in
the places of $\calL_{l}$, $\calG_{g}$,$\alpha_{n}$ and $\theta_{n}$
we get what we need.
\end{proof}

\subsection*{Proof of Theorem \ref{thm:strong-robustness}.}
\begin{proof}
Let $\hat{l}_{\alpha}=\alpha l^{\text{lin}}+g^{\text{sq}}$ be as
defined in eq. \ref{eq:lhat-1}. Since $\Expect{\phi\left(X\right)\phi\left(X\right)^{T}}$
is positive definite $l_{\alpha}$ are $\hat{l}_{\alpha}$ uniquely
minimized at $\theta_{\alpha}$ and $\hat{\theta}_{\alpha}$ . From
the fact that the limit in eq. \ref{eq:Z-to-Zhat-alpha} holds what
we need is a direct consequences of the proof of Theorem\ref{thm:assymptotic-local-robustness}.
\end{proof}

\end{document}